\documentclass[
hf
]{ceurart}

\sloppy

\usepackage{listings}
\lstset{breaklines=true}


\usepackage{enumitem}
\usepackage{algorithm}
\usepackage{algpseudocode}

\usepackage{tikz}
\usetikzlibrary{decorations.pathmorphing,shapes,positioning,arrows.meta,arrows,automata,positioning,calc,shadows}
\usepackage{caption}
\usepackage{subcaption}
\usepackage{amsthm}

\usepackage{xargs} 
\let\todo\relax
 \setlength {\marginparwidth }{2cm}
\usepackage[colorinlistoftodos]{todonotes}
\setuptodonotes{size=tiny}

\newcommandx{\complete}[2][1=]{\todo[linecolor=orange,backgroundcolor=orange!25,bordercolor=orange,#1]{Complete: #2}}
\newcommandx{\tocite}[2][1=]{\todo[linecolor=pink,backgroundcolor=pink!25,bordercolor=pink,#1]{Cite: #2}}
\newcommandx{\unsure}[2][1=]{\todo[linecolor=blue,backgroundcolor=blue!5,bordercolor=blue,#1]{Unsure: #2}}
\newcommandx{\change}[2][1=]{\todo[linecolor=red,backgroundcolor=red!25,bordercolor=red,#1]{Change: #2}}
\newcommandx{\info}[2][1=]{\todo[linecolor=olive,backgroundcolor=olive!25,bordercolor=olive,#1]{Info: #2}}
\newcommandx{\improvement}[2][1=]{\todo[linecolor=purple,backgroundcolor=purple!25,bordercolor=purple,#1]{Improve: #2}}
\newcommandx{\cut}[2][1=]{\todo[linecolor=yellow,backgroundcolor=yellow!25,bordercolor=yellow,#1]{Potential Cut: #2}}


\newcommand{\attacks}[0]{\rightarrow}
\newcommand{\supports}[0]{\Rightarrow}
\newcommand{\args}[0]{\mathcal{A}}

\usepackage{bm}

\newcommand{\x}[1]{x_{#1}}
\newcommand{\y}[1]{y_{#1}}

\newcommand{\argalpha}{a}
\newcommand{\argbeta}{b}
\newcommand{\arggamma}{c}

\newcommand{\xalpha}{\x{\argalpha}}
\newcommand{\xbeta}{\x{\argbeta}}
\newcommand{\xgamma}{\x{\arggamma}}

\newcommand{\yalpha}{\y{\argalpha}}

\newcommand{\ygamma}{\y{\arggamma}}

\newcommand{\case}[2]{(\x{#1}, \y{#2})}
\newcommand{\casealpha}{\case{\argalpha}{\argalpha}}
\newcommand{\casebeta}{\case{\argbeta}{\argbeta}}
\newcommand{\casegamma}{\case{\arggamma}{\arggamma}}

\newcommand{\casedefault}{(\x{\delta}, \delta)}

\newcommand{\argnew}{N}
\newcommand{\casenew}{\case{\argnew}{?}}
\newcommand{\xnew}{\x{\argnew}}

\newcommand{\af}[2][\empty]{$AF_{#1}(D, {#2})$}
\newcommand{\baf}[2][\empty]{$BAF_{#1}(D, {#2})$}

\newtheorem{theorem}{Theorem}[section]
\newtheorem{definition}{Definition}[section]

\begin{document}

\copyrightyear{2025}
\copyrightclause{Copyright for this paper by its authors.
  Use permitted under Creative Commons License Attribution 4.0
  International (CC BY 4.0).}

\conference{IARML@IJCAI'2025: Workshop on the Interactions between Analogical Reasoning and Machine Learning, at IJCAI'2025,  August, 2025, Montreal, Canada}

\title{Supported Abstract Argumentation for Case-Based Reasoning}

\author[1]{Adam Gould}[%
orcid=0009-0008-0972-7501,
email=adam.gould19@imperial.ac.uk
]
\cormark[1]
\author[1]{Gabriel de Olim Gaul}[%
email=gabriel.de-olim-gaul21@imperial.ac.uk
]
\author[1]{Francesca Toni}[%
orcid=0000-0001-8194-1459,
email=f.toni@imperial.ac.uk
]
\address[1]{Department of Computing, Imperial College London, UK}

\cortext[1]{Corresponding author.}

\begin{abstract}
We introduce \textit{Supported Abstract Argumentation for Case-Based Reasoning (sAA-CBR)}, a binary classification model in which 
past \textit{cases} engage in debates by \textit{arguing} in favour of their labelling and \textit{attacking} or \textit{supporting} those with opposing or agreeing labels. With supports, sAA-CBR overcomes the limitation of its precursor AA-CBR, which can contain extraneous 
cases (or \textit{spikes}) that are not included in the debates. We prove that sAA-CBR contains no spikes, without trading off key model properties.
\end{abstract}

\begin{keywords}
  Computational Argumentation \sep
  Case-Based Reasoning \sep
  Machine Learning \sep
\end{keywords}

\maketitle






\textit{Abstract Argumentation for Case-Based Reasoning (AA-CBR)}~\cite{aa-cbr} has proved to be an effective and interpretable, binary classification model~\cite{DEAr,ANNA,preference-based-aacbr}. In AA-CBR, each 
past case \textit{argues} in favour of its labelling and \textit{attacks} those with opposing labels. However, AA-CBR constrains attacks to those between cases of minimal difference, which can result in \textit{spikes}~\cite{monotonicity-and-noise-tolerance}, cases with no contributing factor to the classification of 
new cases. 
To address this, we introduce \textit{Supported 
AA-CBR (sAA-CBR)}, which adds the \textit{supports} relation between cases with agreeing labels.

In sAA-CBR (as in AA-CBR)
labelled (past) cases form a \textit{casebase}, which can be used to debate the outcome for an unlabelled (new) case. The unlabelled case attacks any cases considered \textit{irrelevant} to it. We can then apply argumentation semantics \cite{DUNG-aa,bipolarity-in-argumentation} to determine which arguments are accepted or rejected. The
debate starts with a 
 \textit{default argument}, which argues for a default expected outcome of the new case. If the default argument is accepted, then the model classifies the new case with this outcome, otherwise it assigns the opposing outcome~\cite{aa-cbr}.

To determine the direction of attacks and supports, 
sAA-CBR must have a partial order defined over the casebase, determining a notion of \textit{exceptionality}: more exceptional cases attack or support less exceptional cases. A \textit{minimality} constraint on attacks and supports ensures that cases only relate to those most similar and that there are no superfluous relationships\footnote{The full definition of sAA-CBR can be found in the supplementary material.}.

Consider a simple assessment of a patient's diet. Let the features $A, B, C, D$ represent eating five fruits a day, exceeding daily recommended calories, drinking $\geq 8$ cups of water daily and frequent high-fat food intake. It is unclear if a new patient $C_{N}$, who presents with all of these features, has a healthy $(+)$ or unhealthy $(-)$ diet. By default, patients are expected to have an unhealthy diet; this default is represented by a case with no features: $(\emptyset, -)$. Using known outcomes from four previously observed patients, we can argue about the outcome for the unlabelled patient. Figure~\ref{fig:example} showcases the AA-CBR (Figure~\ref{fig:example-1}) and sAA-CBR (Figure~\ref{fig:example-2}) models. The superset partial order $\supseteq$ is used for the notion of exceptionalism. For instance, case $C_{3}$ contains a superset of $C_{1}$'s features and an opposing outcome, so $C_{3}$ attacks~$C_{1}$. Note that $C_{2}$ does not attack $C_{0}$, despite having a superset of features. This is because $C_{1}$ is more similar to $C_{0}$. This is what we mean when we enforce the minimality constraint
\footnote{For brevity, neither figure shows an example of irrelevance. A case, $C_{5} = (\{E\}, +)$, is irrelevant to $C_{N}$ as $E$ is absent in $C_{N}$. $C_{N}$ would attack $C_{5}$. 
Semantically, such an attack equates to removing $C_{5}$ from the framework.
}.

\newcommand{\examplew}{1}

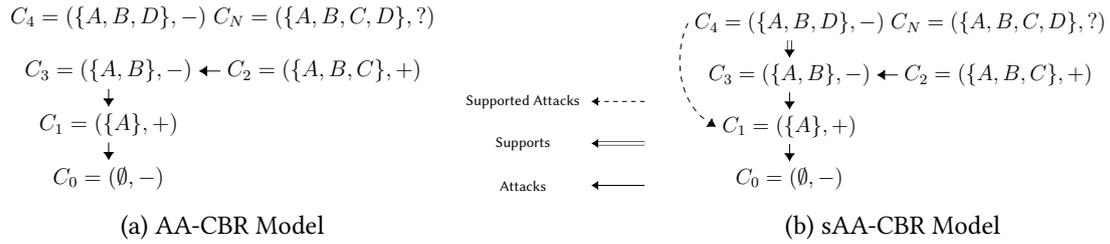
\begin{figure}
    
    \begin{subfigure}[t]{0.40\textwidth}
        \centering
        \resizebox{\examplew\textwidth}{!}{
        \begin{tikzpicture}[main/.style = {draw=none, font=\large}]
    
            \node[main] (C0) at (0, 0)  {$C_{0} = (\emptyset, - )$};
            \node[main] (C1) at (0, 1) {$C_{1} = (\{A\}, + )$};

            \node[main] (C2) at (4, 2)  {$C_{2} = (\{A, B, C\}, + )$};
            \node[main] (C3) at (0, 2)  {$C_{3} = (\{A, B\}, - )$};
            \node[main] (C4) at (0, 3)  {$C_{4} = (\{A, B, D\}, - )$};
            
            \node[main] (N) at (4, 3)  {$C_{N} = (\{A, B, C, D\}, ?)$};

            \draw[-{Latex[length=1.5mm, width=2mm]}] (C1) -- (C0);                   
            \draw[-{Latex[length=1.5mm, width=2mm]}] (C3) -- (C1);                   
            \draw[-{Latex[length=1.5mm, width=2mm]}] (C2) -- (C3);              
            
            \begin{scope}[transparency group, opacity=0.5]
            \end{scope}
    
        \end{tikzpicture} 
        }
    \caption{AA-CBR Model}
    \label{fig:example-1}
    \end{subfigure}
    \hfill
    \begin{subfigure}[t]{0.18\textwidth}
        \centering
        \resizebox{\textwidth}{!}{
        \begin{tikzpicture}[main/.style = {draw=none, font=\small}]
    
            \node[main] (C0) at (0.5, 0)  {};
            \node[main] (C1) at (2, 0) {};
            \node[main] (attacks) at (-1, 0) {Attacks};
            
            \node[main] (C2) at (0.5, 1)  {};
            \node[main] (C3) at (2, 1) {};
            \node[main] (supports) at (-1, 1) {Supports};
            
            \node[main] (C4) at (0.5, 2)  {};
            \node[main] (C5) at (2, 2) {};
            \node[main] (supp_att) at (-1, 2) {Supported Attacks};

            \draw[-{Latex[length=1.5mm, width=2mm]}] (C1) -- (C0);                   
            \draw[double, double distance = 2pt, -{Latex[length=1.5mm, width=2mm]}] (C3) -- (C2);                   
            
            \draw[dashed, -{Latex[length=1.5mm, width=2mm]}] (C5) -- (C4);
            
            \begin{scope}[transparency group, opacity=0.5]
            \end{scope}
    
        \end{tikzpicture} 
        }
    \end{subfigure}
    \hfill
    \begin{subfigure}[t]{0.40\textwidth}
        \centering
        \resizebox{\examplew\textwidth}{!}{
        \begin{tikzpicture}[main/.style = {draw=none, font=\large}]
    
            \node[main] (C0) at (0, 0)  {$C_{0} = (\emptyset, - )$};
            \node[main] (C1) at (0, 1) {$C_{1} = (\{A\}, + )$};

            \node[main] (C2) at (4, 2)  {$C_{2} = (\{A, B, C\}, + )$};
            \node[main] (C3) at (0, 2)  {$C_{3} = (\{A, B\}, - )$};
            \node[main] (C4) at (0, 3)  {$C_{4} = (\{A, B, D\}, - )$};
            
            \node[main] (N) at (4, 3)  {$C_{N} = (\{A, B, C, D\}, ?)$};
            
            \draw[-{Latex[length=1.5mm, width=2mm]}] (C1) -- (C0);                   
            \draw[-{Latex[length=1.5mm, width=2mm]}] (C3) -- (C1);                   
            \draw[-{Latex[length=1.5mm, width=2mm]}] (C2) -- (C3);              
            
            \draw[double, double distance = 2pt, -{Latex[length=1.5mm, width=2mm]}] (C4) -- (C3);                   
            \draw[dashed, -{Latex[length=1.5mm, width=2mm]}] (C4.west) .. controls (-2.2,2.5) and (-2.2,1.5) .. (C1.west);
            
            \begin{scope}[transparency group, opacity=0.5]
            \end{scope}
    
        \end{tikzpicture} 
        }
    \caption{sAA-CBR Model}
    \label{fig:example-2}
    \end{subfigure}

    \caption{The argumentation debate with and without supports.}
    \label{fig:example}
    
\end{figure}

In Figure~\ref{fig:example-1}, we see that for AA-CBR, the minimality condition means that case $C_{4}$ does not attack $C_{1}$, as $C_{3}$ is minimal to $C_{1}$. When computing which arguments are accepted\footnote{Using the grounded argumentation semantics~\cite{DUNG-aa}.}, we find the model predicts that the new case has a healthy diet $(+)$. This is because $C_{2}$ is unattacked and attacks $C_{3}$, so $C_{3}$ is rejected, and thus $C_{1}$ is accepted and $C_{0}$ is rejected. Note that $C_{4}$ did not contribute to the debate as there is no path from $C_{4}$ to $C_{0}$; $C_{4}$ is a \textit{spike}.

In contrast, Figure~\ref{fig:example-2} showcases the corresponding sAA-CBR model, in which $C_{4}$ supports $C_{3}$. We can interpret the support from $C_{4}$ to $C_{3}$ as an attack by $C_{4}$ on $C_{1}$
\footnote{This is known as a \textit{supported attack}~\cite{bipolarity-in-argumentation}. We can also add \textit{secondary attacks}. Model semantics can be found in the supplementary material.
}. Now, as $C_{1}$ is attacked, $C_{0}$ is accepted, and an unhealthy diet $(-)$ is predicted. $C_{4}$ has contributed to the debate, and the predicted outcome has changed. This new predicted outcome is desirable given that the casebase contains more evidence arguing in its favour. We can prove that for sAA-CBR, when the default case is the least exceptional, there are no spikes\footnote{The proof can be found in the supplementary material.}. 

One might argue that by interpreting supports in this way, we could instead remove the minimality condition altogether and not consider supports, but note that we still enforce minimality where necessary. For example in both models $C_{2}$ does not attack $C_{0}$, because $C_{1}$ exists. The total collapse of the minimality condition would lead to $C_{0}$ being automatically rejected as soon as a case with an opposing outcome, relevant to the new case, exists in the dataset, making the model ineffective for classification.

{\bf Conclusion.} We have introduced sAA-CBR, a model that uses supports to remove spikes, thus all data points contribute to debates meaningfully. Future work should explore an empirical analysis of the classification performance of sAA-CBR. Furthermore, supports can be explored in preference-based or cumulative AA-CBR~\cite{preference-based-aacbr,monotonicity-and-noise-tolerance}.


\newpage 
\appendix
\noindent {\LARGE \bf Supplementary Material }

\section{Supported Abstract Argumentation for Case-Based Reasoning Definition}
\label{appendix:saa-cbr}

A \emph{bipolar argumentation framework (BAF)}~\citep{bipolar-framework} extends the argumentation framework with the addition of a \emph{supports} relation. A BAF is represented as $\langle \args, \attacks, \supports \rangle$, where $\args$ is a set of arguments and $\attacks \subseteq \args \times \args$ is the attack relation and $\supports \subseteq \args \times \args$ is the support relation. A BAF can be represented graphically, wherein arguments are represented by nodes and edges are represented by attacks and supports.

\begin{definition}[Supported AA-CBR]
\label{def:supported-aa-cbr}
    Let $D \subseteq X \times Y$ be a finite \emph{casebase} of labelled examples where $X$ is a set of \emph{characterisations} and $Y = \{\delta, \bar{\delta}\}$ is the set of possible outcomes. Each example is of the form $(x, y)$. Let $\casedefault$ be the \emph{default argument} with $\delta$ the \emph{default outcome}. Let $N$ be an \emph{unlabelled example} of the form $\casenew$ with $y_{?}$ an unknown outcome.  
    Let $\succcurlyeq$ and $\nsim$ be a partial order (specifying exceptionalism) and binary relation (specifying irrelevance) defined over $X$, respectively. 
    The bipolar argumentation framework \baf{\xnew} mined from $D$ and $x_N$ is \mbox{$\langle\args, \attacks, \supports\rangle$} in which:

    \begin{itemize}
            \item $\args = D \cup \{\casedefault\} \cup \{N\}$
            \item for $\casealpha, \casebeta \in D \cup \{(x_{\delta}, \delta)\}$, it holds that $\casealpha \attacks \casebeta$ iff
                  \begin{enumerate}
                      \item $y_{\argalpha} \not = y_{\argbeta}$, and
                      \item One of the following holds:  
                      \begin{enumerate}
                          \item $\xalpha$ is more \emph{exceptional} than $\xbeta$ and there is \emph{minimal} difference between them:
                        \begin{enumerate}
                          \item $\x{\argalpha} \succ \xbeta$ and \label{def:aa-cbr:exceptional}
                          \item $\not\exists \casegamma \in D \cup \{(\x{\delta}, \delta)\}$ with $\ygamma = \yalpha$ and $\xalpha \succ \xgamma \succ \xbeta$; \hfill
                        \end{enumerate}
                        \item or $\xalpha$ is equivalent to $\xbeta$: 
                        \begin{enumerate}
                          \item $\xalpha = \xbeta$; \label{def:aa-cbr:symmetric-attack}
                      \end{enumerate}
                      \end{enumerate}
                  \end{enumerate}
            \item for $\casealpha \in D \cup \{(\x{\delta}, {\delta})\}$, it holds that $N \attacks \casealpha$ iff $\xnew \nsim \xalpha.$
            \item for $\casealpha, \casebeta \in D \cup \{(x_{\delta}, \delta)\}$, it holds that $\casealpha \supports \casebeta$ iff
                  \begin{enumerate}
                      \item $y_{\argalpha} = y_{\argbeta}$, and
                      \item $\xalpha$ is more \emph{exceptional} than $\xbeta$ and there is \emph{minimal} difference between them:
                      \begin{enumerate}
                      \item $\x{\argalpha} \succ \xbeta$ and
                      \item $\not\exists \casegamma \in D \cup \{(\x{\delta}, \delta)\}$ with $\xalpha \succ \xgamma \succ \xbeta$. \hfill \label{def:supported-aa-cbr:minimality}
                      \end{enumerate}
                  \end{enumerate}
    \end{itemize}
    
\end{definition}

\section{sAA-CBR Semantics}
\label{appendix:semantics}

To compute which arguments are accepted, we first translate the model to an \textit{Argumentation Framework~(AF)}~\cite{DUNG-aa}, which can be defined as a BAF with no supports and only attacks. To do this, we make use of \textit{supported attacks} and \textit{secondary attacks}~\cite{bipolarity-in-argumentation}.

\begin{definition}[adapted from~\cite{bipolarity-in-argumentation}]

Supported and secondary attacks are defined as:

\begin{itemize}
    \item There is a \textit{supported attack} from $\argalpha \in \args$ to $\argbeta \in \args$ iff there is sequence $a_{1} \supports \ldots \supports a_{n-1} \attacks a_{n}$ where $a_{1} = \argalpha$ and $a_{n} = \argbeta$. That is, there is a supported attack between $\argalpha$ and $\argbeta$ iff there is a sequence of supports from $\argalpha$ until some argument $a_{n-1}$ that attacks $\argbeta$. 
    
    \item There is a \textit{secondary attack} from $\argalpha \in \args$ to $\argbeta \in \args$ iff there is sequence $a_{1} \attacks \ldots \attacks a_{n-1} \supports a_{n}$ where $a_{1} = \argalpha$ and $a_{n} = \argbeta$. That is, there is a secondary attack between $\argalpha$ and $\argbeta$ iff there is a sequence of attacks from $\argalpha$ until some argument $a_{n-1}$ that supports $\argbeta$.
\end{itemize}

\end{definition}

We can then remove all supports from the BAF and construct the AF using only the attacks, supported attacks, and secondary attacks.

When used with sAA-CBR, these \textit{complex attacks} will always occur from more exceptional to less exceptional cases. The literature also defines other forms of attacks, such as mediated attacks~\cite{bipolarity-in-argumentation}, in which if an argument $\argalpha$ attacks $\argbeta$, then $\argalpha$ will also attack all arguments in the sequence of supporters of $\argbeta$. However, this clearly would lead to attacks from less exceptional to more exceptional cases, violating a key principle of AA-CBR, so we do not include these attacks.

Once the BAF has been translated to an AF, we compute the grounded semantics~\cite{DUNG-aa}, telling us which arguments to accept. We say a set of arguments $E \subseteq \args$ \emph{defends} an argument $\argbeta \in \args$ if for all $\argalpha \attacks \argbeta$ there exists $\arggamma \in E$ such that $\arggamma \attacks \argalpha$. The set of accepted arguments, called the \emph{grounded extension}, can be iteratively computed as $\mathbb{G} = \bigcup_{i \ge 0} G_i$, where $G_0$ is the set of unattacked arguments and $\forall i \ge 0$, $G_{i+1}$ is the set of all arguments that $G_i$ defends.

Finally, we have sAA-CBR$(D, \x{N}) = \delta$ if $\casedefault \in \mathbb{G}$ and $\bar \delta$ otherwise, where $\mathbb{G}$ is the grounded extension of \af{N}. 

\section{Spikes}
\label{appendix:spikes}

We can now prove that sAA-CBR contains no spikes. We begin by defining spikes formally as:

\begin{definition}
    Given a BAF $\langle \args, \attacks, \supports \rangle$ generated by sAA-CBR, then $\argalpha \in \args$ is a \textit{spike} iff there is no path in $\langle \args, \attacks, \supports \rangle$ from $\argalpha$ to $\casedefault$.
\end{definition}

We have the following theorem:

\begin{theorem}
    Given a BAF $\langle \args, \attacks, \supports \rangle$ generated by sAA-CBR where $x_{\delta}$ is the least element of $\succcurlyeq$, then the BAF contains no spikes.
\end{theorem}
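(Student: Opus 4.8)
The plan is to prove a slightly stronger statement, namely that \emph{every} argument in $D \cup \{\casedefault\}$ admits a path to $\casedefault$, by well-founded induction on the strict exceptionality order $\succ$ (which is well-founded because $D$ is finite). The new case $\argnew$ only ever has outgoing attacks, to cases irrelevant to it, and is never itself attacked or supported; it therefore cannot lie on any other argument's path, and since spikes are past cases with no contribution to the debate, the objects that must be shown not to be spikes are exactly the members of $D \cup \{\casedefault\}$. So I would set $\argnew$ aside and concentrate on these.

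The crux is a single-step claim: every $\casealpha \in D \cup \{\casedefault\}$ with $\casealpha \neq \casedefault$ either attacks $\casedefault$ directly or has an outgoing attack or support to some strictly less exceptional argument. To prove it I would split on $\xalpha$. If $\xalpha = \x{\delta}$ then, since $\casealpha \neq \casedefault$, we must have $\yalpha = \bar\delta \neq \delta$, and the equal-characterisation clause~(\ref{def:aa-cbr:symmetric-attack}) gives $\casealpha \attacks \casedefault$ at once. If instead $\xalpha \succ \x{\delta}$, I would consider $S = \{\casegamma \in D \cup \{\casedefault\} : \xalpha \succ \xgamma\}$, which is finite and nonempty because it contains $\casedefault$, and pick a $\succ$-maximal element $\casebeta \in S$. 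Maximality forces that there is no $\casegamma \in D \cup \{\casedefault\}$ with $\xalpha \succ \xgamma \succ \xbeta$: any such $\casegamma$ would lie in $S$ and strictly dominate $\casebeta$. Thus $\casebeta$ is a \emph{cover} of $\casealpha$ from below.

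The key observation is that such a cover discharges both minimality conditions of Definition~\ref{def:supported-aa-cbr} simultaneously, whatever the outcomes are. If $\yalpha \neq \ybeta$, then $\casealpha \attacks \casebeta$ holds, since the outcomes differ, $\xalpha \succ \xbeta$, and there is no intermediate $\casegamma$ at all (in particular none with $\ygamma = \yalpha$). If $\yalpha = \ybeta$, then $\casealpha \supports \casebeta$ holds for the same reason, as the support minimality clause~(\ref{def:supported-aa-cbr:minimality}) carries no outcome restriction and the cover forbids any strictly intermediate $\casegamma$. In both cases we obtain an edge from $\casealpha$ to $\casebeta$ with $\xbeta \prec \xalpha$; this is precisely the place where the newly added support relation repairs the gaps that leave spikes in plain AA-CBR.

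Assembling the induction is then routine: from any non-default $\casealpha$ we either reach $\casedefault$ directly, or step to a strictly $\succ$-smaller $\casebeta$ to which the induction hypothesis supplies a path $\casebeta \to \cdots \to \casedefault$; prefixing the new edge extends it to $\casealpha$. Because $\x{\delta}$ is the least element of $\succcurlyeq$ it is the unique $\succ$-minimal argument, so a descending chain cannot terminate anywhere but at $\casedefault$, which together with well-foundedness guarantees the path closes at the default. I expect the main obstacle to be the single-step claim itself, since the natural worry is that the minimality constraints might simultaneously block \emph{every} outgoing edge of some case; the resolution is the cover construction, which removes all intermediate arguments at once and then lets the outcome comparison decide between an attack and a support. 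The symmetric-attack boundary case $\xalpha = \x{\delta}$ and the fact that a least element is the unique minimal element are the two remaining details I would verify explicitly.
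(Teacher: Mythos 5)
Your proposal is correct and follows essentially the same route as the paper's proof: descend from any case to a ``most minimal'' strictly less exceptional case, observe that the minimality clauses then force either an attack (differing outcomes) or a support (agreeing outcomes), and iterate until the default is reached. Your version is in fact more careful than the paper's, since you make the descent precise via well-founded induction on the finite order and you explicitly handle the boundary case $\xalpha = \x{\delta}$ through the equal-characterisation attack clause, which the paper's proof glosses over.
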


\begin{proof}

Consider an argument $\casealpha \in \args$. We have that either $\casealpha$ attacks or supports $\casedefault$, which is possible as $\xalpha \succ x_{\delta}$, in which case $\casealpha$ is not a spike, or there exists some case $\casebeta$ such that $\xalpha \succ \xbeta$. Either, $\casebeta$ is minimal to $\casealpha$, in which case $\casealpha$ must attack or support $\casebeta$ or there is some other case, $\casegamma$ that is more minimal to $\casealpha$. In either case, $\casealpha$ must have at least one attack or support from another argument to one that it is most minimal to. The same reasoning applies for the arguments $\casebeta$ and $\casegamma$, which must either attack or support the default or have at least one other argument, most minimal to them, that they attack or support. This subsequent argument must either attack or support the default or another argument. We can keep applying this reasoning until the default argument is the only one to attack or support, that is, until the default argument is the most minimal to the argument we are considering. Thus, every argument has a path to the default argument.

\end{proof}

\clearpage
\begin{acknowledgments}
Research by Adam Gould was supported by UK Research and Innovation [UKRI Centre for Doctoral Training in AI for Healthcare grant number EP/S023283/1].
Francesca Toni was partially funded by the ERC under the EU’s Horizon 2020 research and innovation programme (grant agreement No. 101020934). Toni was also partially funded by J.P. Morgan and the Royal Academy of
Engineering, UK, under the Research Chairs and Senior Research Fellowships scheme.
    
\end{acknowledgments}

\bibliography{sample-ceur}

\begin{thebibliography}{8}
\expandafter\ifx\csname natexlab\endcsname\relax\def\natexlab#1{#1}\fi
\providecommand{\url}[1]{\texttt{#1}}
\providecommand{\href}[2]{#2}
\providecommand{\path}[1]{#1}
\providecommand{\DOIprefix}{doi:}
\providecommand{\ArXivprefix}{arXiv:}
\providecommand{\URLprefix}{URL: }
\providecommand{\Pubmedprefix}{pmid:}
\providecommand{\doi}[1]{\href{http://dx.doi.org/#1}{\path{#1}}}
\providecommand{\Pubmed}[1]{\href{pmid:#1}{\path{#1}}}
\providecommand{\bibinfo}[2]{#2}
\ifx\xfnm\relax \def\xfnm[#1]{\unskip,\space#1}\fi
\bibitem[{Čyras et~al.(2016)Čyras, Satoh, and Toni}]{aa-cbr}
\bibinfo{author}{K.~Čyras}, \bibinfo{author}{K.~Satoh}, \bibinfo{author}{F.~Toni},
\newblock \bibinfo{title}{Abstract argumentation for case-based reasoning},
\newblock in: \bibinfo{booktitle}{Fifteenth International Conference on the Principles of Knowledge Representation and Reasoning}, \bibinfo{publisher}{AAAI Press}, \bibinfo{year}{2016}.
\bibitem[{Cocarascu et~al.(2020)Cocarascu, Stylianou, Čyras, and Toni}]{DEAr}
\bibinfo{author}{O.~Cocarascu}, \bibinfo{author}{A.~Stylianou}, \bibinfo{author}{K.~Čyras}, \bibinfo{author}{F.~Toni},
\newblock \bibinfo{title}{Data-empowered argumentation for dialectically explainable predictions},
\newblock in: \bibinfo{booktitle}{ECAI 2020}, \bibinfo{publisher}{IOS Press}, \bibinfo{year}{2020}, pp. \bibinfo{pages}{2449--2456}.
\bibitem[{Cocarascu et~al.(2018)Cocarascu, Čyras, and Toni}]{ANNA}
\bibinfo{author}{O.~Cocarascu}, \bibinfo{author}{K.~Čyras}, \bibinfo{author}{F.~Toni},
\newblock \bibinfo{title}{Explanatory predictions with artificial neural networks and argumentation},
\newblock in: \bibinfo{booktitle}{Proceedings of the 2ndWorkshop onExplainable Artificial Intelligence (XAI 2018)}, \bibinfo{year}{2018}. \URLprefix \url{http://hdl.handle.net/10044/1/62202}.
\bibitem[{Gould et~al.(2024)Gould, Paulino-Passos, Dadhania, Williams, and Toni}]{preference-based-aacbr}
\bibinfo{author}{A.~Gould}, \bibinfo{author}{G.~Paulino-Passos}, \bibinfo{author}{S.~Dadhania}, \bibinfo{author}{M.~Williams}, \bibinfo{author}{F.~Toni},
\newblock \bibinfo{title}{{Preference-Based Abstract Argumentation for Case-Based Reasoning}},
\newblock in: \bibinfo{booktitle}{{KR}}, \bibinfo{year}{2024}, pp. \bibinfo{pages}{394--404}. \URLprefix \url{https://doi.org/10.24963/kr.2024/37}. \DOIprefix\doi{10.24963/kr.2024/37}.
\bibitem[{Paulino-Passos and Toni(2021)}]{monotonicity-and-noise-tolerance}
\bibinfo{author}{G.~Paulino-Passos}, \bibinfo{author}{F.~Toni},
\newblock \bibinfo{title}{Monotonicity and noise-tolerance in case-based reasoning with abstract argumentation},
\newblock in: \bibinfo{booktitle}{Proceedings of the Eighteenth International Conference on Principles of Knowledge Representation and Reasoning}, \bibinfo{year}{2021}, pp. \bibinfo{pages}{508--518}. \URLprefix \url{https://doi.org/10.24963/kr.2021/48}. \DOIprefix\doi{10.24963/kr.2021/48}.
\bibitem[{Dung(1995)}]{DUNG-aa}
\bibinfo{author}{P.~M. Dung},
\newblock \bibinfo{title}{On the acceptability of arguments and its fundamental role in nonmonotonic reasoning, logic programming and n-person games},
\newblock \bibinfo{journal}{Artificial Intelligence} \bibinfo{volume}{77} (\bibinfo{year}{1995}) \bibinfo{pages}{321--357}. \URLprefix \url{https://www.sciencedirect.com/science/article/pii/000437029400041X}. \DOIprefix\doi{https://doi.org/10.1016/0004-3702(94)00041-X}.
\bibitem[{Cayrol and Lagasquie-Schiex(2013)}]{bipolarity-in-argumentation}
\bibinfo{author}{C.~Cayrol}, \bibinfo{author}{M.-C. Lagasquie-Schiex},
\newblock \bibinfo{title}{Bipolarity in argumentation graphs: Towards a better understanding},
\newblock \bibinfo{journal}{International Journal of Approximate Reasoning} \bibinfo{volume}{54} (\bibinfo{year}{2013}) \bibinfo{pages}{876–899}. \URLprefix \url{http://dx.doi.org/10.1016/j.ijar.2013.03.001}. \DOIprefix\doi{10.1016/j.ijar.2013.03.001}.
\bibitem[{Cayrol and Lagasquie-Schiex(2005)}]{bipolar-framework}
\bibinfo{author}{C.~Cayrol}, \bibinfo{author}{M.-C. Lagasquie-Schiex},
\newblock \bibinfo{title}{On the acceptability of arguments in bipolar argumentation frameworks},
\newblock in: \bibinfo{booktitle}{European Conference on Symbolic and Quantitative Approaches to Reasoning and Uncertainty}, \bibinfo{organization}{Springer}, \bibinfo{year}{2005}, pp. \bibinfo{pages}{378--389}.

\end{thebibliography}

\appendix

\end{document}